\newtheorem{theorem}{Theorem}
\title{A-MHA*: Anytime Multi-Heuristic A*}
\author{Ramkumar Natarajan$^\dagger$\thanks{These authors contributed equally.}, Muhammad Suhail Saleem$^\dagger$\footnotemark[1], William Xiao$^\dagger$, \\ \textbf{\Large Sandip Aine$^\ddagger$, Howie Choset$^\dagger$, Maxim Likhachev$^\dagger$} \\ \normalfont $^\dagger$The Robotics Institute, Carnegie Mellon University  \\ \normalfont $^\ddagger$ Apple Inc. }
\begin{document}

\maketitle

\begin{abstract}

Designing good heuristic functions for graph search requires adequate domain knowledge. It is often easy to design heuristics that perform well and correlate with the underlying true cost-to-go values in certain parts of the search space but these may not be admissible throughout the domain thereby affecting the optimality guarantees of the search. Bounded suboptimal search using several such partially good but inadmissible heuristics was developed in Multi-Heuristic A* (MHA*) \cite{aine2016multi}. Although MHA* leverages multiple inadmissible heuristics to potentially generate a faster suboptimal solution, the original version does not improve the solution over time. It is a one shot algorithm that requires careful setting of inflation factors to obtain a desired one time solution. In this work, we tackle this issue by extending MHA* to an anytime version that finds a feasible suboptimal solution quickly and continually improves it until time runs out. Our work is inspired from the Anytime Repairing A* (ARA*) algorithm \cite{likhachev2004ara}. We prove that our precise adaptation of ARA* concepts in the MHA* framework preserves the original suboptimal and completeness guarantees and enhances MHA* to perform in an anytime fashion. Furthermore, we report the performance of A-MHA* in 3-D path planning domain and sliding tiles puzzle and compare against MHA* and other anytime algorithms.

\end{abstract}

\section{Introduction}
Real world and real-time planning requires utilizing the limited amount of time available to find a solution that is as close as possible to the optimal one. To that end, anytime algorithms have been developed that can generate a quick suboptimal solution and keep improving it over time. In addition, it is vital to know the quality of such intermediate solutions to decide whether to continue running the planner or to terminate it. Bounded suboptimal search algorithms deals with this problem and provides guarantees on the solution cost. Informed search or heuristic search is an important subclass of these search algorithms that employ underestimates of the true cost-to-go called heuristics to ensure completeness and find bounds on the solution quality. In order to obtain such quantifiable solutions, these heuristics have to satisfy critical properties called admissibility and consistency for all the states in the entire search space. However, crafting heuristics that can obey those properties for large state spaces and high-dimensional planning problems are incredibly hard. 

In many real world scenarios, it is often easy to deduce heuristics that aims to partially solve the bigger problem in hand. Multi-Heuristic A* \cite{aine2016multi} is a recent work that tries to combine such arbitrarily inadmissible heuristics to speed up the search while ensuring strong guarantees. It needs the user to specify the desired suboptimality factor of the output solution prior to beginning the search. This is an extremely tricky step as one requires thorough domain knowledge to strike a balance between runtime and solution quality. In the absence of such domain knowledge, an anytime planner that can rapidly find a low quality solution and steer towards asymptotic convergence is preferred. However, this class of planners can provide guarantees only with consistent heuristics. Anytime Multi-Heuristic A* brings together the best of both worlds. It makes use of multiple inadmissible but informative heuristics supported by an admissible heuristic, to find a suboptimal path as quickly as possible and continues to improve it until the expiration of the allocated time.


 The rest of the paper is organized as follows: In the next section, we briefly go over the related work from anytime and multi-heuristic search. It will be followed by the proposed algorithm and the theoretical properties. We conclude with experimental results and future work.

\section{Related Work}

Efficiency of informed search algorithms, such as A* rely heavily on the accuracy of the heuristic functions. A* with an admissible heuristic is a provable optimal algorithm. However, its runtime and memory requirements often makes it unusable for large state spaces. Weighted A* (WA*) \cite{pohl1970heuristic} can dramatically improve the 
runtime as it inflates the heuristic with a factor $w > 1.0$, providing a greedy flavor to the search. It is 
also a bounded suboptimal algorithm, \emph{i.e.}, the solution obtained is bounded by $w$ times the optimal cost. 
With WA*, the reliance on the heuristic accuracy is magnified (compared to A*), and its 
performance can suffer significantly if the heuristic is subject to large
local minima \cite{wilt2012does}.

Multi Heuristic A* \footnote{We refer to the Shared version of MHA* in the original paper as MHA*} \cite{aine2016multi} alleviates this problem of careful heuristic construction by using multiple heuristics simultaneously to explore a search space. MHA* uses one consistent heuristic and multiple (possibly) inadmissible heuristics, to guide the search around local minima. It often performs better by exploiting the synergy provided by different heuristics, each of which maybe useful in different parts of the search space. MHA* provides guarantees on completeness and bounded suboptimality along with bounds on state expansion (at most 2 expansions per state). There
are variants of the MHA* that improves upon the original by using intelligent scheduling or better bounding \cite{narayanan2015improved}. MHA* and its variants have been recently applied to several complex search problems including fullbody planning \cite{islam2015dynamic}.

A*, WA*, MHA* are all one shot algorithm, as such these do not provide a handle to reason about the trade-off 
between solution quality and runtime. Anytime search algorithms, on the other hand, iteratively improve the solution 
quality, and thus provide the user an opportunity to tradeoff runtime with solution quality. Anytime Repairing A* (ARA*) \cite{likhachev2004ara}, is an anytime search algorithm that uses WA* for a particular iteration, and runs in an anytime mode by decreasing the suboptimality bound over time. ARA* has been successfully applied to many domains, such as 
autonomous cars, mobile manipulation, footstep planning, drones, etc. Other anytime search algorithms include, algorithms based on WA* \cite{richter2010joy} \cite{van2011anytime}, beam search \cite{zhou2005beam}, sliding window search \cite{aine2007awa} etc.

\section{Anytime Multi-Heuristic A* (A-MHA*)}
\textbf{Notations: } Let $s \in \mathcal{S}$ denote the finite set of discrete states over which we search for a path from $s_{start}$ to $s_{goal}$. The search typically proceeds by expanding states to generate successors $s^\prime \in Succ(s)$ based on a priority. The current best cost and the optimal cost to arrive at a state $s$ is denoted by $g(s)$ and $g^*(s)$. $c(s, s^\prime)$ denotes the cost between any two states $s$ and $s^\prime$ connected by an edge. 

As mentioned before, MHA* incorporates a single admissible heuristic $h_0(s)$ and multiple inadmissible heuristics denoted by $h_i(s), \ i=1,...,N$. In this paper, we refer to this admissible search as the anchor search and the other searches as inadmissible searches. We assume that we have access to such admissible and inadmissible heuristics. Let the inflation of the anchor search be $w_1$ and let $w_2$ be the inflation factor to prioritize inadmissible search. Because of the anytime nature of the algorithm, the inflation factors are updated and the found solution is improved over time. They are initialized to $w_1^0$ and $w_2^0$ and updated using $\Delta w_1$ and $\Delta w_2$. With one admissible heuristic and $N$ inadmissible heuristics, the $N+1$ priority queues of expansion are given by $OPEN_0$  and $OPEN_i,  \ i=1,...,N$ respectively. The priority of the states in $OPEN_i$ and $OPEN_0$ are given by $key(s, i) = g(s)+w_1*h_i(s)$. In order to track and prevent re-expansions within a single search improvement routine, we have anchor and inadmissible closed lists and an inconsistent list denoted as $CLOSED_{anch}$, $CLOSED_{inad}$, and $INCONS$ respectively. 

\subsection{Algorithm}
The psuedocode of the proposed algorithm is presented in Algorithm \ref{alg:amha}. The structure of the A-MHA* is similar to anytime search algorithms like ARA* \cite{likhachev2004ara} or ANA* \cite{van2011anytime}. The $\textproc{Main()}$ function consists of the outer loop from which the \textproc{ImprovePath()} function is called with the updated suboptimality bound. The \textproc{ImprovePath()} function is a modified MHA* routine that guarantees $w_1*w_2$ suboptimality and keeps track of inconsistent states to reuse the search results during the next iteration. It consists of two parts, the one that exploits the $w_1$ bounded anchor search (Lines 23-24) and the other that explores the $w_1*w_2$ bounded paths through inadmissible search (Lines 20-21). 

During every iteration of the \textproc{ImprovePath()} function, the option of expanding a state from $OPEN_0$ or $OPEN_i$ is decided depending on their minimum key and $w_2$ (Line 19). We build on the notion of local inconsistency from ARA* \cite{likhachev2004ara} to introduce the inconsistent list in A-MHA* and keep track of the states which were already expanded and whose $g(s)$ is reduced. During the \textproc{Expand()} operation, the state $s$ being expanded is popped from all the $N+1$ queues and checked if it could be a better predecessor (lower $g(s)+c(s,s^\prime)$) to any of the successors $s^\prime$. An update of $g(s^\prime)$ with a better predecessor could cause a local inconsistency between the g-value of $s^\prime$ and all its successors which has to be propagated by putting $s^\prime$ into $OPEN_0$ and $OPEN_i$. In case $s^\prime$ is already expanded (\emph{i.e.} $s^\prime \in CLOSED_{anch}$ or $CLOSED_{inad}$), we delay this propagation by maintaining an $INCONS$ list, an idea developed in ARA* (Lines 8-12). We note that only one $INCONS$ list is needed despite having two $CLOSED$ lists. This can be understood from the observation that all the states added to any $OPEN_i$ are also added to $OPEN_0$ and when a state is expanded from $OPEN_0$, it is never re-expanded from any other $OPEN_i$ in the same \textproc{ImprovePath()} iteration. So if we find a better predecessor to a state which has not been expanded by $OPEN_0$ yet, the priority of the state is updated in $OPEN_0$ and if it has been expanded, it is added to the $INCONS$ list. Thus, after exiting \textproc{ImprovePath()}, the states from both $OPEN_0$ and $INCONS$ are added to both $OPEN_0$ and $OPEN_i$, thereby making sure that all the inconsistent states are tracked by just using $OPEN_0$ and a single $INCONS$ list. 

After exiting from the \textproc{ImprovePath()}, the solution obtained is guaranteed to be $w_1*w_2$ suboptimal (proof in next subsection). Before the next call to \textproc{ImprovePath()}, we move the $INCONS$ states to $OPEN_0$ and $OPEN_i$, re-heap the queues, clear the $CLOSED_{anch}$ \& $CLOSED_{inad}$ and update the $w_1$ and $w_2$ with $\Delta w_1$ and $\Delta w_2$ (Lines 31-40).

\begin{algorithm}[H]
\caption{Anytime Multi Heuristic A* algorithm}
\label{alg:amha}
\begin{algorithmic}[1]
\Procedure{key}{$s,i$}
\State \textbf{return} $g(s) + w_1 * h_i(s)$;
\EndProcedure
\Procedure{Expand}{$s,i$}
\State Remove $s$ from $OPEN_i$ $\forall$ $i = 0,1...N$
\State \textbf{for} each $s^\prime$ in Succ(s)
\State $\>$ \textbf{if} $g(s^{\prime}) > g(s) + c(s,s^{\prime})$
\State $\>$ $\>$ $g(s^{\prime}) = g(s) + c(s,s^{\prime})$
\State $\>$ $\>$ \textbf{if} $s^{\prime}$ in $CLOSED_{anch}$
\State $\>$ $\>$ $\>$ Add $s^{\prime}$ to $INCONS$
\State $\>$ $\>$ \textbf{else}
\State $\>$ $\>$ $\>$ Insert/Update $s^{\prime}$ in $OPEN_0$ with $\textproc{key}(s^{\prime},0)$
\State $\>$ $\>$ $\>$ \textbf{if} $s^{\prime}$ not in $CLOSED_{inad}$
\State $\>$ $\>$ $\>$ $\>$ \textbf{for} $i = 1$ to $n$
\State $\>$ $\>$ $\>$ $\>$ $\>$ \textbf{if} $\textproc{key}(s^{\prime},i) \leq w_2 * \textproc{key}(s^{\prime},0)$ 
\State $\>$ $\>$ $\>$ $\>$ $\>$ $\>$ Insert/Update $s^{\prime}$ in $OPEN_i$ with $\textproc{key}(s^{\prime},i)$
\EndProcedure
\Procedure{ImprovePath}{$ $}
\State \textbf{while} $f(s_{goal}) > w_2 * OPEN_0.Min()$
\State $\>$ \textbf{for} $i =1...N$
\State $\>$ $\>$ \mbox{\textbf{if}$(OPEN_i.Min()\leq w_2*OPEN_0.Min())$}
\State $\>$ $\>$ $\>$ s = $OPEN_i.Top()$
\State $\>$ $\>$ $\>$ \textproc{Expand}$(s,i)$ and Insert s in $CLOSED_{inad}$
\State $\>$ $\>$ \textbf{else}
\State $\>$ $\>$ $\>$ s = $OPEN_0.Top()$
\State $\>$ $\>$ $\>$ \textproc{Expand}$(s,0)$ and Insert s in $CLOSED_{anch}$
\EndProcedure
\Procedure{Main}{$ $}
\State $w_1 = w_1^{0}$; $w_2 = w_2^{0}$; $g(s_{start}) = 0$; $g(s_{goal}) = \infty$; 
\State \textbf{for} $i = 0...N$
\State $\>$ $OPEN_i$ = NULL
\State $\>$ Insert $s_{start}$ in $OPEN_i$ with $\textproc{key}(s,i)$
\State \textbf{while} $w_1\ge1$ and $w_2\ge1$
\State $\>$ \mbox{$CLOSED_{anch}$ = $CLOSED_{inad}$ = NULL}
\State $\>$ $INCONS$ = NULL
\State $\>$ \textproc{ImprovePath}()
\State $\>$ Publish current $w_1*w_2$ suboptimal solution
\State $\>$ \textbf{if} $w_1 == 1$ and $w_2 == 1$ 
\State $\>$ $\>$ \textbf{return}
\State $\>$ $w_i = max(w_i - \Delta w_i, 1);$ $i=1,2$
\State $\>$ Move states from $INCONS$ into $OPEN_0$ 
\State $\>$ Copy all states from $OPEN_0$ to $OPEN_i$
\State $\>$ Update the priorities $\forall s \in OPEN_i; \forall i = 0..N$
\EndProcedure
\end{algorithmic}
\end{algorithm}

\subsection{Properties of A-MHA*}
In this subsection, we provide two important properties of A-MHA*. First, we show that the solution provided by any \textproc{ImprovePath()} call is $w_1*w_2$ suboptimal. Next, we show that within each call of \textproc{ImprovePath()}, a state is expanded at-most twice. 

\begin{theorem}
At the exit of \textproc{ImprovePath()}, the cost of the greedy path from $s_{start}$ to any state $s$, $g(s)$, is upper-bounded by $w_1*w_2$ times the cost of the optimal path to the goal $g^*(s_{goal})$. 
\end{theorem}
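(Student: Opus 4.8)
The plan is to reduce the statement to a single invariant on the anchor queue: that $OPEN_0.Min() \le w_1\,g^*(s_{goal})$ holds at every point while \textproc{ImprovePath()} is running. (I read the theorem as its intended content — the solution bound $g(s_{goal}) \le w_1 w_2\,g^*(s_{goal})$ at exit, the ``$w_1 w_2$ suboptimal solution'' mentioned just before it — together with the same bound for every state that is actually expanded during the call.) Granted the invariant, the rest is mechanical: a state $s$ popped from $OPEN_0$ has $g(s) \le key(s,0) = OPEN_0.Min() \le w_1\,g^*(s_{goal})$ at that instant; a state popped from some $OPEN_i$ has, by the branch condition $OPEN_i.Min() \le w_2\,OPEN_0.Min()$ and $h_i \ge 0$, $g(s) \le key(s,i) \le w_2\,OPEN_0.Min() \le w_1 w_2\,g^*(s_{goal})$; since $g$-values only decrease, both bounds persist to the exit. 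And $s_{goal}$ is either expanded (hence covered) or else the while loop terminates only once $f(s_{goal}) = g(s_{goal}) \le w_2\,OPEN_0.Min() \le w_1 w_2\,g^*(s_{goal})$ (using $h_0(s_{goal}) = 0$).

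For the invariant I would adapt the ``frontier on the optimal path'' argument of ARA*/MHA*. Fix an optimal path $v_0 = s_{start}, v_1, \dots, v_k = s_{goal}$. The key lemma is that at every iteration of the while loop there is an index $i$ with $v_i \in OPEN_0$, with $v_0,\dots,v_{i-1}$ already expanded, and with $g(v_i) \le w_1\,g^*(v_i)$. Since $h_0$ is consistent (hence admissible) and $v_i$ lies on an optimal path to $s_{goal}$, $h_0(v_i) \le g^*(s_{goal}) - g^*(v_i)$, so
\[
OPEN_0.Min()\ \le\ key(v_i,0)\ =\ g(v_i) + w_1\,h_0(v_i)\ \le\ w_1\,g^*(v_i) + w_1\bigl(g^*(s_{goal}) - g^*(v_i)\bigr)\ =\ w_1\,g^*(s_{goal}).
\]
The existence and $g$-bound of the frontier are proved by induction on the order of expansions: when any state $u$ is popped from $OPEN_0$, the displayed inequality applied with $u$ in place of $s_{goal}$ (the frontier on an optimal path to $u$) gives $g(u) \le w_1\,g^*(u)$ at that moment; then for the optimal-path successor $v_i$ that $v_{i-1}$ relaxed while $v_i \notin CLOSED_{anch}$, consistency and $w_1 \ge 1$ give $g(v_i) \le g(v_{i-1}) + c(v_{i-1},v_i) \le w_1\,g^*(v_{i-1}) + c(v_{i-1},v_i) \le w_1\,g^*(v_i)$. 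Expansions from the inadmissible queues are folded in by noting that \textproc{Expand}$(s,i)$ still relaxes the successors into $OPEN_0$ and that its state has $g(s) \le w_1 w_2\,g^*(s_{goal})$ (first paragraph), which is all the induction needs; the only role of the inflation factors is $w_1, w_2 \ge 1$.

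The displayed step is routine — it is essentially the weighted-A* bound with weight $w_1$, and it is indifferent to which queue did the work. I expect the real work to be in verifying that the frontier genuinely sits in $OPEN_0$ with the claimed $g$-value despite A-MHA*'s bookkeeping: two closed lists $CLOSED_{anch}, CLOSED_{inad}$, a single $INCONS$ list, and an \textproc{Expand} that deletes a state from \emph{all} $N{+}1$ queues. Two facts need to be nailed down: (i) an inadmissible expansion can evict an optimal-path state from $OPEN_0$, and a later relaxation of that state may be routed into $INCONS$ rather than $OPEN_0$, but this never strands the frontier, because an optimal-path state currently outside $OPEN_0$ and not in $CLOSED_{anch}$ cannot have had its $g$ improved since its last expansion (the insert-into-$OPEN_0$ branch of \textproc{Expand} would have re-inserted it), so its $g$-value is still the bounded value it held when expanded; and (ii) a state in $CLOSED_{anch}$ was anchor-expanded exactly once, at that time with $g \le w_1\,g^*(\cdot)$, and relaxed its optimal-path successor then. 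Finally, since the theorem is asserted at the exit of \emph{every} \textproc{ImprovePath()} call, the whole argument should be wrapped in an outer induction over the calls of \textproc{Main()}: one checks that the reset at the end of each \textproc{Main()} iteration (clearing the $CLOSED$ lists, moving $INCONS$ into $OPEN_0$, copying $OPEN_0$ into every $OPEN_i$, re-heaping --- Lines~31--40 of Algorithm~\ref{alg:amha}) restores the structural precondition the frontier argument relies on, namely that $OPEN_0$ contains every locally inconsistent state; the base case is immediate because $s_{start}$ enters every queue with $g(s_{start}) = 0 = g^*(s_{start})$.
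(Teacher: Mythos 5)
Your proof follows the same route as the paper's: bound anchor expansions via the weighted-A* invariant $OPEN_0.Min() \le w_1\,g^*(s_{goal})$, then use the line-19 test to bound inadmissible expansions by $w_2$ times that, covering $s_{goal}$ either by expansion or by the loop's exit condition. The paper's own proof is a three-sentence sketch that simply asserts the anchor invariant ``from WA*'' and never touches the frontier/$INCONS$/eviction bookkeeping or the induction across \textproc{ImprovePath()} calls that you work through, so yours is a strictly more careful rendering of the same argument (the only imprecision being that when an inadmissible expansion evicts the frontier state, what the induction needs is its frontier bound $g(v_i)\le w_1 g^*(v_i)$, not the weaker $w_1 w_2\, g^*(s_{goal})$ bound you cite there --- a point your item (i) effectively gets right anyway).
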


\begin{proof}
From WA* we know that any state $s$ expanded by the anchor search has a priority (and thereby $g(s)$) lesser than $w_1$ times the optimal path cost $g^*(s_{goal})$ (since the admissible heuristic is an underestimate of the actual cost). By imposing the condition specified in line 19, we restrict the inadmissible expansions only to states whose priority is lesser than $w_1*w_2$ times the optimal cost. Thus, for any state $s \in OPEN_i$ (whose heuristic could be an overestimate) to have a priority lesser than $w_1*w_2$ times the optimal cost, would imply that the current cost to reach that particular state $g(s)$ is lesser than $w_1*w_2$ times optimal path cost. Thus any expansion in the inadmissible search is bounded by $w_1*w_2$ times optimal path cost, including the expansion of the $s_{goal}$. Hence, the path found by \textproc{ImprovePath()} is guaranteed to be $w1*w2$ suboptimal. 

\end{proof}

\begin{theorem}
Within a single \textproc{ImprovePath()} call, any state is expanded atmost twice.
\end{theorem}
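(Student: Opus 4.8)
The plan is to partition the expansions that a fixed state $s$ undergoes during a single \textproc{ImprovePath()} call into two classes --- \emph{anchor expansions}, in which $s$ is popped from $OPEN_0$ (Lines 23--24), and \emph{inadmissible expansions}, in which $s$ is popped from some $OPEN_i$ with $i \ge 1$ (Lines 20--21) --- and then to bound each class separately by one. Summing the two bounds yields the claimed total of two.

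The crux is two monotonicity facts about the closed lists, both obtained by line-by-line inspection of \textproc{Expand()}. First, within one \textproc{ImprovePath()} call neither $CLOSED_{anch}$ nor $CLOSED_{inad}$ ever loses an element: elements are only added (Lines 24 and 21), and the lists are cleared only in \textproc{Main()} between calls. Second, the only place a state is ever (re-)inserted into an $OPEN$ queue \emph{during} \textproc{ImprovePath()} is inside \textproc{Expand()} at Lines 12 and 16, and these insertions are guarded: Line 12 (insertion into $OPEN_0$) is reached only when $s^\prime \notin CLOSED_{anch}$ (Line 9), and Line 16 (insertion into $OPEN_i$, $i \ge 1$) is reached only when additionally $s^\prime \notin CLOSED_{inad}$ (Line 13). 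Combining the two facts gives: (i) once $s \in CLOSED_{anch}$, it is never again inserted into any $OPEN_i$; and (ii) once $s \in CLOSED_{inad}$, it is never again inserted into any $OPEN_i$ with $i \ge 1$.

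From here the counting is immediate. A state can be popped only from a queue in which it currently resides, and \textproc{Expand()} removes $s$ from \emph{all} $N+1$ queues (Line 5) the instant it is expanded; so to be expanded a second time from $OPEN_0$, $s$ would have to be re-inserted into $OPEN_0$ after its first anchor expansion --- but that expansion placed $s$ in $CLOSED_{anch}$, which by (i) forbids any such re-insertion. Hence at most one anchor expansion. Symmetrically, a second inadmissible expansion would require re-insertion into some $OPEN_i$, $i \ge 1$, after $s$ entered $CLOSED_{inad}$, which (ii) forbids; hence at most one inadmissible expansion. Therefore $s$ is expanded at most twice. One can further remark that the bound of two is attained only by the pattern ``inadmissible expansion, then anchor expansion'', since an anchor expansion, by (i), blocks \emph{every} subsequent re-insertion of $s$, the inadmissible queues included.

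The main obstacle here is bookkeeping rather than depth: one must be careful that Line 5 wipes $s$ from every queue upon expansion, so that re-insertion via a strictly improved $g$-value is genuinely the \emph{only} route by which an already-expanded state can reappear in any $OPEN_i$ --- after which facts (i) and (ii) precisely close off the reappearances that would permit a third expansion. The remaining steps (that the $CLOSED$ lists only grow within a call, and that no $OPEN$ insertion occurs outside the two guarded lines of \textproc{Expand()}) are routine inspection of Algorithm~\ref{alg:amha}.
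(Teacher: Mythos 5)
Your proof is correct and follows essentially the same route as the paper's: both arguments hinge on the observation that an anchor expansion places $s$ in $CLOSED_{anch}$, whose guard at Line 9 blocks every subsequent re-insertion, while an inadmissible expansion places $s$ in $CLOSED_{inad}$, whose guard at Line 13 blocks only re-insertion into the inadmissible queues, giving at most one expansion of each kind. Your write-up merely makes explicit the bookkeeping (monotone closed lists, Line 5 clearing all queues, re-insertion as the sole path to re-expansion) that the paper leaves implicit.
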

\begin{proof}
If a state is expanded by a call to \textproc{Expand(s)} from anchor search in line 24, it is added to $CLOSED_{anch}$ and can never be expanded by both inadmissible and anchor search again (note the nested if condition between lines 8 and 15). Next, if a state is expanded by a call to \textproc{Expand(s)} from inadmissible search in line 21, it is added to $CLOSED_{inad}$ and can only be expanded by the anchor search (lines 12-15). Hence, in a single call to \textproc{ImprovePath()}, a state can only be expanded atmost twice. 
\end{proof}

\section{Experimental Results}
We evaluate the performance of A-MHA* on the sliding tiles puzzle and 3D navigation (x,y,orientation) domains and compare it with the performance of other state of the art search algorithms. The experiments are setup similar to those in the original MHA* paper \cite{aine2016multi}, to accurately evaluate the performance of our algorithm and compare in a fair manner.
\begin{figure}[h!]
    \centering
    \includegraphics[width=0.5\textwidth,  trim={4cm 8cm 4cm 8cm}, clip]{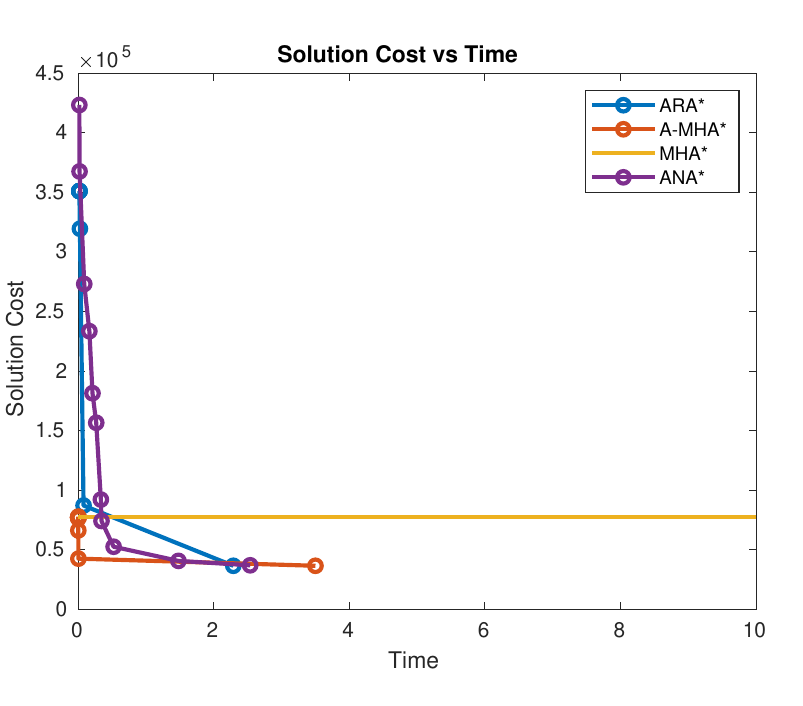}
    \caption{Solution Cost vs Time for 3D planning}
    \label{fig:point_robot}
\end{figure}

\begin{figure}[h!]
    \centering
    \includegraphics[width=0.5\textwidth, trim={4cm 8cm 4cm 8cm}, clip]{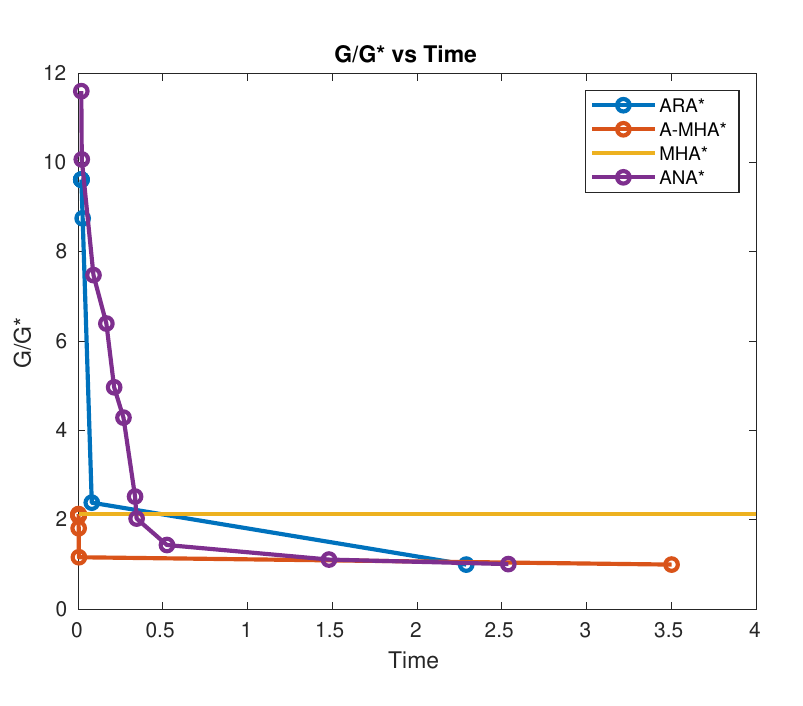}
    \caption{Suboptimality of solution vs Time for 3D planning}
    \label{fig:point_robot_2}
\end{figure}


\subsection{3D Path Planning}
Here, we plan for a polygonal robot with three degrees of freedom (x,y,orientation) in a 2-D planar environment. The plan has to satisfy the minimum turning radius constraints of the robot, which is imposed using motion primitives for generating successors from a state (similar to the lattice type planner \cite{likhachev2009planning}). 

The consistent heuristic, which is the same across all the different planners, is the euclidean distance from the goal. In addition to this, the inadmissible heuristics used for MHA* and A-MHA* include an 8-connected Dijkstra search assuming the robot to have zero size and two other progressive heuristics obtained by running 8-connected Dijkstra search on a map created by blocking the narrow passages (passage width $\leq$ robot size) present in the current map.

From Figure \ref{fig:point_robot} and \ref{fig:point_robot_2}, it is clear that A-MHA* is capable of producing a high quality solution much quicker than the other algorithms, which it continues to improve over time. However, it has to be noted that the usage of inadmissible heuristics delays the convergence to optimal solution. 

\subsection{Sliding Tiles Puzzle}
In this subsection, we present the results for 48 and 63 sliding tiles puzzle, a domain commonly used to evaluate search algorithms. The consistent heuristic for this domain which is widely used in this literature is the sum of the Manhattan Distance ($MD$) and the Linear Conflict ($LC$). Similar to the original MHA* paper, the inconsistent heuristics are a weighted sum of the number of misplaced tiles ($MT$), $MD$ and $LC$, where the weights are randomly generated during execution. 
\begin{table}
\begin{center}
 \begin{tabular}{||c|c|c|c|c||} 
 \hline
 \textbf{Metric} & \textbf{A-MHA*} & \textbf{ARA*} & \textbf{ANA*} & \textbf{MHA*}\\ [0.5ex] 
 \hline\hline
 Success rate & 88.24 & 70.59 & 44.18 & 88.24\\ 
 \hline
 $T_{initial}$ & 9.69 & 18.99 & 23.13 & 9.69\\
 \hline
 $T_{final}$ & 39.70 & 29.82 & 41.73 & 9.69\\
 \hline
 $\epsilon_{initial}$ & 25 & 25 & 3.58e+07 & 25\\
 \hline
 $\epsilon_{final}$ & 4 & 5.1667 & 3.31 & 25\\ [1ex] 
 \hline
\end{tabular}
\caption{Average statistics for 50 instances of 63 tile sliding puzzle: $T_{initial}$ - Time to produce the first solution; $T_{final}$ - Time to produce the final solution; $\epsilon_{initial}$ - Reported Initial suboptimality bound;  $\epsilon_{final}$ - Reported final suboptimality bound.}\label{tab:results}

\end{center}
\end{table}

\begin{table}
\begin{center}
 \begin{tabular}{||c|c|c|c|c||} 
 \hline
 \textbf{Metric} & \textbf{A-MHA*} & \textbf{ARA*} & \textbf{ANA*} & \textbf{MHA*}\\ [0.5ex] 
 \hline\hline
 Success rate & 100 & 75 & 75 & 100\\ 
 \hline
 $T_{initial}$ & 17.51 & 15.05 & 42.41 & 17.51\\
 \hline
 $T_{final}$ & 42.90 & 31.075 & 111.42 & 17.51\\
 \hline
 $\epsilon_{initial}$ & 25 & 25 & 2.95e+07 & 25\\
 \hline
 $\epsilon_{final}$ & 7.33 & 8.77 & 7.90 & 25\\ [1ex] 
 \hline
\end{tabular}
\caption{Average statistics for 50 instances of 48 tile sliding puzzle: $T_{initial}$ - $T_{initial}$ - Time to produce the first solution; $T_{final}$ - Time to produce the final solution; $\epsilon_{initial}$ - Reported Initial suboptimality bound;  $\epsilon_{final}$ - Reported final suboptimality bound.}\label{tab:results_2}

\end{center}
\end{table}

From Table. \ref{tab:results} and \ref{tab:results_2}, we understand that A-MHA* has the highest success rate (number of instances for which the puzzle was solved within a limited time) and clearly outperforms the other algorithms consistently. For 63 tile sliding puzzles, similar to the previous domain, A-MHA* produces high quality solutions  in much lesser time, while converging to the final suboptimality bound slower. However, for the 48 tiles sliding puzzle environment, the performance of A-MHA* is comparable to that of ARA*. This is because the effect of the additional heuristics in a smaller environment might not have the same impact as it did in a bigger/complex environment and hence might not outweigh the overhead in maintaining several heaps.

\section{Conclusion}
We presented an anytime version of MHA*, while preserving completeness, expansion and suboptimality guarantees. Experimental results in two different domains strongly favor A-MHA* over the other anytime algorithms. This simple algorithm brings together the benefits of both ARA* and MHA*, thereby maximizing the performance in terms of both solution quality and run-time. One interesting future work could be nonparametrize the inflation factors of A-MHA* to circumvent the tedious process of tuning them.

\bibliographystyle{aaai} 
\bibliography{references}
\end{document}